\newcolumntype{Y}{>{\centering\arraybackslash}X}
\theoremstyle{plain}
\newtheorem{definition}{\textbf{Definition}}
\newtheorem{assumption}{\textbf{Assumption}}
\newtheorem{lemma}{\textbf{Lemma}}
\newtheorem{proposition}{\textbf{Proposition}}
\newtheorem{main_result}{Main result}
\newtheorem{remark}{Remark}
\newtheorem{remarque}[assumption]{\textbf{Remark}}
\newcommand{\sref}[2]{\hyperref[#2]{#1 \ref*{#2}}}
\newcommand{\qedwhite}{\hfill \ensuremath{\square}}
\newtheorem{application}{\textbf{Application}}
\begin{document}

\title{Universal hidden monotonic trend estimation with contrastive learning}

\author{
  Edouard Pineau \\
  \texttt{EthiFinance}\\
  \texttt{edouard.pineau@ethifinance.com}
  \And
  Sébastien Razakarivony \\
  \texttt{Safran}\\
  \texttt{sebastien.razakarivony@safrangroup.com}
}


\maketitle

\begin{abstract}
  In this paper, we describe a universal method for extracting the underlying monotonic trend factor from time series data. We propose an approach related to the Mann-Kendall test, a standard monotonic trend detection method and call it \textit{contrastive trend estimation} (CTE). We show that the CTE method identifies any hidden trend underlying temporal data while avoiding the standard assumptions used for monotonic trend identification. In particular, CTE can take any type of temporal data (vector, images, graphs, time series, etc.) as input. We finally illustrate the interest of our CTE method through several experiments on different types of data and problems. 
\end{abstract}



\section{Introduction}
\label{intro}


Our paper focuses on the estimation of a monotonic trend factor underlying temporal data. Such estimation is interesting in many fields, e.g., health monitoring \cite{pineau2020unsupervised}, survival analysis \cite{miller2011survival} or climate change monitoring \cite{Huang2022}. In all these fields and related trend estimation problems, we observe samples generated by a monitored system (e.g., an ageing mechanical system, a credit debtor, earth's weather and climate conditions) at different times in its life, and we assume that the state of the system drifts monotonically. These observed samples may be of any type (e.g., vectors, images, time series, graphs), depending on the monitored system. \autoref{fig:context} illustrates the general context of trend estimation.

\begin{figure}
    \centering
    \includegraphics[width=0.8\linewidth]{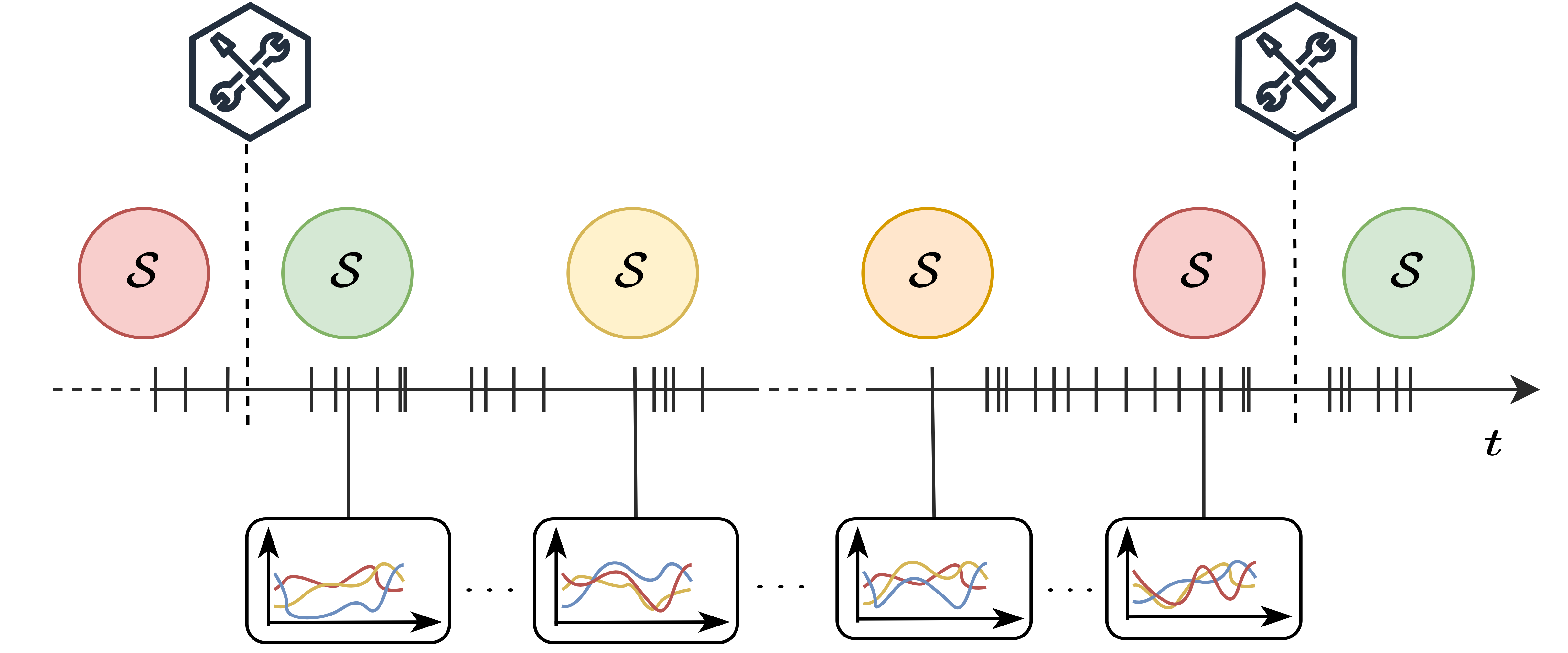}
    \caption{\textit{\color{black}
    Illustration of the context of the paper's contribution. We have a monitored system $\mathcal{S}$ that generates data samples (colored curves) at random time. The hidden trend $\tau$ underlying the system (colors from green to red) represents the hidden state of $\mathcal{S}$ that changes monotonically until a state restoration is applied (tools in hexagons): samples between two state restorations form a sequence with a monotonic hidden trend. The relation between trend and observed data may be an arbitrary function yet assumed to preserve the information about the trend.}}
    \label{fig:context}
\end{figure}

More generally, when studying temporal data, it is common to assume the existence of \textit{structural latent factors}, supposed meaningful, that generated the data \cite{harvey199310}. These components are generally allocated into four groups.
%
The \textit{trend} components are monotonic long-term signals. 
%
The \textit{cycle} components are factors exhibiting rises and falls that are not of a fixed frequency.
%
The \textit{seasonality} components are periodic patterns occurring at a fixed frequency.
%
The \textit{irregularity} factors represent the rest of the information (considered as a noise). 
We assume independent structural factors. The challenging yet essential task is the \textit{identification}
of one or several of these factors, that is called \textit{blind source separation} \cite{choi2005blind}, \textit{independent component analysis} \cite{hyvarinen2000independent} or \textit{disentanglement} \cite{bengio2013representation}. In this paper, the objective is to detect, isolate and identify only the trend component. \cite{hyvarinen2016unsupervised} shows that if we know one hidden component under time series data, we can find the others conditionally. Hence, finding the trend component is not only useful for many monitoring problems, it is relevant for further analysis.

Often, trend estimation methods seek monotonic variations in the values of the data or in expert-based statistics computed from data \cite{chen2017design,niknam2017techniques}. In practice, the trend can be deeply hidden in the data or may be not well defined because of a lack of information about the monitored system. Hence, we may not know which variable or statistics to follow to find the trend.

In this paper, we \textit{learn to infer} the trend factor from data (of any type) without labels or expert supervision, using only samples' time index. To do so, we develop a general method based on \textit{Contrastive Learning} (CL). CL recently received high interest in self-supervised representation learning \cite{le2020contrastive}, in particular for time series data (see, e.,g.,  \cite{franceschi2019unsupervised,banville2019self,wang2015unsupervised}). Our CL approach uses a loss inspired by Mann-Kendall test \cite{Mann1945}, a standard trend detection method. 

\vspace{0.3cm}

The rest of the paper presents our universal \textit{trend inference} method called \textit{Contrastive Trend Estimation} (CTE).
%
\sref{Section}{sec:cte_setup} presents the method. 
\sref{Section}{sec:ctd_identifiability} analyzes the theoretical foundation of our method in terms of identifiability. 
\sref{Section}{sec:related_work_ctd} lists related works on trend detection and estimation. 
\sref{Section}{sec:experiments} presents a set of experiments to illustrate the interest of our approach for trend estimation and survival analysis. 
Concluding remarks are presented in \sref{Section}{sec:conclusion}.

\section{Contrastive trend detection}
\label{sec:cte_setup}


\paragraph{\textbf{Notations.}} Let $X$ be a sequence of $N_X\in\mathds{N}$ observed samples generated by a monitored system denoted by $\mathcal{S}$. We assume that a hidden state of $\mathcal{S}$ drifts monotonically. We note $\mathcal{X}$ the dataset of all sequences $X$ in which there exist a hidden monotonic factor. We note $t_i$ the time index of the $i^{th}$ observed sample, $i \in \llbracket 1, N_X \rrbracket$. We assume that each sequence $X \in \mathcal{X}$ has been generated from structural factors through a function $F$, such that at least the information about the trend is not annihilated (in blind source separation problems, $F$ would be assumed invertible).
That is, for each $X$ there exist $Z^X \coloneqq (\tau^X, c^X, s^X, \epsilon^X)$ such that $X_{t_i} = F(Z^X_{t_i})$
%
%
,where  
$\tau^X$, $c^X$, $s^X$, and $\epsilon^X$ represent respectively (resp.) the monotonic trend, the cycle, the seasonality, and the irregularity that generated $X$. The paper's goal is to estimate the factor $\tau^X$ from $X$.

\paragraph{\textbf{Our CTE approach.}} For each $X \in \mathcal{X}$, we select two sampling times $t_u, t_v$ in $\{t_1, \dots, t_{N_X}\}^2$, such that, without loss of generality (w.l.o.g.), $t_u<t_v$. 
The value of the hidden trend at the sampling time $t$ for $X_t$ is noted $\tau^X_t$.
Since we do not have access to the \textit{true} hidden trend, we need assumptions about $\tau^X$. We use the natural Assumption \ref{hyp:monotonicity} to estimate $\tau^X$. 

\begin{assumption}
{\normalfont \textbf{(Monotonicity).}} For each sequence $X \in \mathcal{X}$ and all sample couples  $(X_{t_u},X_{t_v})$, we have that $t_u \leq t_v \Longleftrightarrow \tau^X_{t_u} \leq \tau^X_{t_v}$.
\label{hyp:monotonicity}
\qedwhite
\end{assumption}

\noindent To extract the trend component, we use a neural network (NN) $F_\phi$ with parameters $\phi$ that embeds each sample $X_t$ into a $d_e$-dimensional vector space, with which we define $g_{\beta, \phi}: \mathbb{R}^{d_e} \times \mathbb{R}^{d_e} \to [0,1]$ a parametric logistic regressor defined as follows:

\begin{equation}
    g_{\beta, \phi}\left(X_{t_u}, X_{t_v} \right)
    = \sigma\left( \beta^\top F_\phi(X_{t_v}) - \beta^\top F_\phi(X_{t_u}) \right)  \;,
    \label{eq:cte_model}
\end{equation}

\noindent where $\sigma(x) \coloneqq (1 + e^{-x})^{-1}$ is the sigmoid function. 
Let $C_{uv} \coloneqq \mathds{1}_{\{\tau^X_{t_u} \leq \tau^X_{t_v}\}}$ be the indicator function that describes the trend direction between $t_u$ and $t_v$ for any sample $X$. Under the \sref{Assumption}{hyp:monotonicity}, we have also $C_{uv}=\mathds{1}_{\{t_u \leq t_v\}}$. Then, we can build $C_{uv}$ from sample's time indices. We then can learn the posterior distribution $p(C_{uv} | X_{t_u}, X_{t_v})$, i.e., learn the identity:

\begin{equation}
\label{eq:perfect_g}
    p(C_{uv} = 1| X_{t_u}, X_{t_v}) = g_{\beta, \phi}\left(X_{t_u}, X_{t_v} \right) \;.
\end{equation}

\noindent As for common binary classification problems, training is done by minimizing the binary cross entropy (BCE) between $C_{uv}$ and the regressor $g_{\beta, \phi}\left(X_{t_u}, X_{t_v} \right)$, for all pairs of time indices $(t_u, t_v)$, $\forall X in \mathcal{X}$, i.e., by minimizing:

\begin{equation}
    R(\beta,\phi; \mathcal{X}) = -\,\mathbb{E}_{X \in \mathcal{X}} \left[ \sum_{i,j=1}^{N_X} {C_{ij} 
    \log \big(g_{\beta, \phi}
    \left(X_{t_i}, X_{t_j} \right)
    \big)} \right]\;.
\label{eq:ranking_problem}
\end{equation}

\begin{remarque}
Eq.~\eqref{eq:ranking_problem} is similar to the Mann-Kendall statistics of eq.~\eqref{eq:mk_statistics} presented in \sref{Section}{sec:related_work_ctd} of related work.
\qedwhite
\end{remarque}

Once the parameters $(\phi, \beta)$ are fitted, we build an estimator $\beta^\top F_\phi(X_{t})$ of the trend factor $\tau^X_t$. In the next section, we show in what extent this estimator effectively estimates the hidden trend factor.

\section{Identifiability study}
\label{sec:ctd_identifiability}

We assume that $F_\phi$ is a universal approximation function (e.g., a sufficiently large NN) and that the amount of data is large enough (equivalent to infinite data) such that we achieve the identity of eq.~\eqref{eq:perfect_g}. 

\begin{definition} 
{\normalfont \textbf{(Minimal sufficiency).}} A sufficient statistic $T$ is minimal sufficient if for any sufficient statistic $U$, there exists a function $h$ such that $T=h(U)$. If $U$ is also minimal, then $h$ is a bijection. 
\label{def:minimal_sufficiency}
\qedwhite
\end{definition}

\begin{proposition}
    $\beta^\top \left(F_\phi(X_{t_v}) - F_\phi(X_{t_u})\right)$ is a minimal sufficient statistic for trend label $C_{uv}$.
    \label{prop:minimal_sufficiency}
\end{proposition}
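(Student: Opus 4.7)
The proof plan hinges entirely on the identifiability assumption stated at the top of \sref{Section}{sec:ctd_identifiability}, namely that minimising the BCE objective drives $g_{\beta,\phi}$ to the posterior identity \eqref{eq:perfect_g}. Setting $T := \beta^\top F_\phi(X_{t_v}) - \beta^\top F_\phi(X_{t_u})$, this hypothesis reads
\begin{equation*}
    p(C_{uv}=1 \mid X_{t_u}, X_{t_v}) \;=\; \sigma(T),
\end{equation*}
so the whole conditional law of the label $C_{uv}$ given the pair depends on the pair only through the scalar $T$. Both parts of the proposition will flow from this single fact.

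Sufficiency in the discriminative sense is then immediate: since $\sigma(T)$ is $T$\nobreakdash-measurable, the identity above forces $p(C_{uv} \mid X_{t_u}, X_{t_v}) = p(C_{uv} \mid T)$, i.e.\ $C_{uv} \perp (X_{t_u}, X_{t_v}) \mid T$. For minimality, I would apply \sref{Definition}{def:minimal_sufficiency} directly. Let $U = U(X_{t_u}, X_{t_v})$ be any other sufficient statistic for $C_{uv}$. By sufficiency of $U$, $p(C_{uv}=1 \mid U) = p(C_{uv}=1 \mid X_{t_u}, X_{t_v}) = \sigma(T)$ almost surely, and since $\sigma : \mathbb{R} \to (0,1)$ is a bijection this rewrites as
\begin{equation*}
    T \;=\; \sigma^{-1}\bigl( p(C_{uv}=1 \mid U) \bigr) \;=\; h(U),
\end{equation*}
where $h$ is the measurable map $u \mapsto \sigma^{-1}\!\bigl(\mathbb{E}[C_{uv} \mid U=u]\bigr)$. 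This is exactly the condition required by \sref{Definition}{def:minimal_sufficiency}.

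The main obstacle is conceptual rather than technical: one must be comfortable reading sufficiency in the posterior form $p(C_{uv}\mid X) = p(C_{uv}\mid T)$, since here $C_{uv}$ is a latent label rather than a parameter in the classical Fisher--Neyman sense. A reassuring cross-check, which I would mention in the write-up, is that Bayes' rule turns \eqref{eq:perfect_g} into
\begin{equation*}
    \log \frac{p(X_{t_u}, X_{t_v} \mid C_{uv}=1)}{p(X_{t_u}, X_{t_v} \mid C_{uv}=0)} \;=\; T \;+\; \log \frac{p(C_{uv}=0)}{p(C_{uv}=1)},
\end{equation*}
so that $T$ is an affine, hence bijective, function of the log-likelihood ratio, which is the textbook minimal sufficient statistic for a binary label. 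Either route reduces the proposition to the sole assumption that training realises the posterior identity \eqref{eq:perfect_g}, with no further distributional hypothesis on $F$, $\tau^X$ or the other structural factors.
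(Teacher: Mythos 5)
Your proof is correct, but it takes a genuinely different route from the paper's. The paper works forward from eq.~\eqref{eq:perfect_g} via Bayes' rule and the equal-priors observation $p(C_{uv}=1)=p(C_{uv}=0)$ to show that $e^{T}$ equals the likelihood ratio $p(X_{t_u},X_{t_v}\mid C_{uv}=1)/p(X_{t_u},X_{t_v}\mid C_{uv}=0)$, and then invokes an external result (Theorem~2 of the cited reference) asserting that the density ratio is minimal sufficient. You instead stay entirely with the posterior identity $p(C_{uv}=1\mid X_{t_u},X_{t_v})=\sigma(T)$: sufficiency follows because the conditional law of the label factors through $T$, and minimality is verified directly against \sref{Definition}{def:minimal_sufficiency} by writing $T=\sigma^{-1}\bigl(p(C_{uv}=1\mid U)\bigr)=h(U)$ for any sufficient $U$. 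Your argument is more self-contained (no external minimality theorem), does not need the equal-priors step (the prior ratio is absorbed into an affine constant in your cross-check), and makes explicit the Bayesian/predictive reading of sufficiency that the paper leaves implicit; what the paper's route buys in exchange is the explicit identification of $T$ with a likelihood ratio, which it reuses in the subsequent remark on likelihood-free inference and in the connection to proportional odds models. Your closing cross-check essentially recovers the paper's derivation, so the two arguments are consistent.
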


\begin{proof}
    
    First we remind that logistic regression learns likelihood ratios, i.e., $F_{\beta, \phi}$ is a log-likelihood difference. In fact, using the Bayes rule, we get
    
    \begin{equation}
        p(C_{uv} = 1 | X_{t_u}, X_{t_v}) = \frac{p(X_{t_u}, X_{t_v} | C_{uv} = 1)p(C_{uv} = 1)}{p(X_{t_u}, X_{t_v})}.
    \label{eq:bayes}
    \end{equation}
    
    \noindent Moreover, using properties of sigmoid function $\sigma$ and eq.~\eqref{eq:perfect_g}, we have
    
    \begin{equation}
        e^{\beta^\top \left(F_\phi(X_{t_v}) - F_\phi(X_{t_u})\right)}= \frac{p(C_{uv}=1 | X_{t_u}, X_{t_v})}{p(C_{uv}=0 | X_{t_u}, X_{t_v})}\;.
        \label{eq:sigmoid}
    \end{equation}

    \noindent Finally, from eq.\eqref{eq:bayes} and eq.~\eqref{eq:sigmoid} we obtain

    \begin{align*}
        e^{ \beta^\top \left(F_\phi(X_{t_v}) - F_\phi(X_{t_u}) \right)} = \frac{p(X_{t_u}, X_{t_v} | C_{uv}=1)p(C_{uv}=1)}{p(X_{t_u}, X_{t_v} | C_{uv}=0)p(C_{uv}=0)}.  
    \end{align*}

    \noindent We note that $p(C_{uv}=1) = p(C_{uv}=0)$, since we randomly choose $t_u$ and $t_v$ simultaneously within $\{1, \dots, N_X \}$. Hence,
    
    \begin{equation}
        e^{ \beta^\top \left(F_\phi(X_{t_v}) - F_\phi(X_{t_u}) \right)} = \frac{p(X_{t_u}, X_{t_v} | C_{uv}=1)}{p(X_{t_u}, X_{t_v} | C_{uv}=0)},
    \label{eq:likelihood_ratio}
    \end{equation}
    
    \noindent that is a likelihood ratio. \textit{Theorem~2} of \cite{goh2001econ} states that the density ratio is a minimal sufficient statistics. Then, from \sref{Definition}{def:minimal_sufficiency}, $\beta^\top \left(F_\phi(X_{t_v}) - F_\phi(X_{t_u})\right)$ is a minimal sufficient statistic for $C_{uv}$. 

\end{proof}

\begin{remarque}
    Learning the likelihood ratio of eq.~\eqref{eq:likelihood_ratio} without explicitly knowing the likelihood is called \textit{likelihood-free inference} \cite{thomas2016likelihood}. Related to our approach, \cite{gutmann2018likelihood} explains how classification can be used to do likelihood-free inference using CL. Compared to maximum likelihood approaches, its main advantage lies in the fact that contrasting two models enables to cancel out some computationally untractable terms (as the log-determinant of the Jacobian of the embedding function or the partition function of the model).
    \qedwhite
\end{remarque}

\begin{lemma}
    $\exists h$ monotonic such that $\beta^\top \left(F_\phi(X_{t_v}) - F_\phi(X_{t_u})\right) = h(\tau^X_{t_v} - \tau^X_{t_u})$. 
\label{lemma:minimal_sufficiency}
\end{lemma}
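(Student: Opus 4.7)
The plan is to combine \sref{Assumption}{hyp:monotonicity} with the minimal-sufficiency result of \sref{Proposition}{prop:minimal_sufficiency}. The strategy is to exhibit $\Delta\tau := \tau^X_{t_v} - \tau^X_{t_u}$ as another sufficient statistic for $C_{uv}$, then invoke \sref{Definition}{def:minimal_sufficiency} to factor $\beta^\top(F_\phi(X_{t_v}) - F_\phi(X_{t_u}))$ through $\Delta\tau$, and finally argue via the likelihood-ratio identity \eqref{eq:likelihood_ratio} that the resulting $h$ is monotonic.

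First I would verify that $\Delta\tau$ qualifies as a statistic: by the standing assumption that $F$ does not annihilate trend information, $\tau^X_t$ is recoverable from $X_t$, so $\Delta\tau$ is a measurable function of $(X_{t_u}, X_{t_v})$. Sufficiency of $\Delta\tau$ for $C_{uv}$ then follows directly from \sref{Assumption}{hyp:monotonicity}: since $C_{uv} = \mathds{1}_{\{\Delta\tau \geq 0\}}$ is a deterministic function of $\Delta\tau$, conditioning on $\Delta\tau$ already fixes $C_{uv}$, hence $p((X_{t_u}, X_{t_v}) \mid \Delta\tau, C_{uv}) = p((X_{t_u}, X_{t_v}) \mid \Delta\tau)$, which is the defining property of a sufficient statistic. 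Applying \sref{Proposition}{prop:minimal_sufficiency} and \sref{Definition}{def:minimal_sufficiency} to this sufficient $\Delta\tau$ then yields the existence of some $h$ such that $\beta^\top(F_\phi(X_{t_v}) - F_\phi(X_{t_u})) = h(\Delta\tau)$.

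The main obstacle is the monotonicity of $h$. I would exploit the likelihood-ratio identity from the proof of \sref{Proposition}{prop:minimal_sufficiency}, namely $\exp(h(\Delta\tau)) = p(X_{t_u}, X_{t_v} \mid C_{uv}=1) / p(X_{t_u}, X_{t_v} \mid C_{uv}=0)$. Under \sref{Assumption}{hyp:monotonicity}, the conditional $p(\cdot \mid C_{uv}=1)$ is supported on $\{\Delta\tau \geq 0\}$ and $p(\cdot \mid C_{uv}=0)$ on $\{\Delta\tau < 0\}$, so as $\Delta\tau$ increases through $0$ the ratio moves from small to large values; equivalently, a larger trend gap makes the observation increasingly typical of the $C_{uv}=1$ population relative to the $C_{uv}=0$ one. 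Since $\exp$ is strictly increasing, $h$ inherits the monotonicity of the likelihood ratio in $\Delta\tau$, giving the claim. The delicate point here is that in the strict, noiseless reading of the setup the ratio jumps from $0$ to $+\infty$ at $\Delta\tau = 0$, so $h$ is monotonic only in the weak (step-function) sense; one might therefore want to smooth the argument by viewing the identity at the level of posteriors $\sigma(h(\Delta\tau)) = p(C_{uv}=1 \mid X_{t_u}, X_{t_v})$, where the right-hand side is non-decreasing in $\Delta\tau$ and $\sigma^{-1}$ is monotonic, yielding monotonicity of $h$ directly.
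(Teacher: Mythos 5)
Your proof follows the same skeleton as the paper's: exhibit $\Delta\tau := \tau^X_{t_v}-\tau^X_{t_u}$ as a sufficient statistic for $C_{uv}$, combine this with the minimal sufficiency of $\beta^\top\left(F_\phi(X_{t_v})-F_\phi(X_{t_u})\right)$ from \sref{Proposition}{prop:minimal_sufficiency}, and read the factorization off \sref{Definition}{def:minimal_sufficiency}. The two places where you diverge are, if anything, improvements. First, the paper asserts that $\Delta\tau$ is \emph{minimal} sufficient ``by definition'' (without argument) and invokes the bijection clause of \sref{Definition}{def:minimal_sufficiency} to conclude $h$ is a bijection of $\mathbb{R}$, ``hence monotonic''; you prove only plain sufficiency of $\Delta\tau$ (via the observation that $C_{uv}$ is a deterministic function of $\Delta\tau$ under \sref{Assumption}{hyp:monotonicity}) and use the weaker clause $T=h(U)$, which already gives existence of $h$ and requires less. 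Second, your monotonicity argument is genuinely different: the paper's ``bijective hence monotonic'' step is not valid for an arbitrary bijection of $\mathbb{R}$ absent a continuity assumption, whereas your route through the likelihood ratio is sounder in principle and honestly exposes a degeneracy the paper glosses over --- under the strict reading of \sref{Assumption}{hyp:monotonicity} the ratio jumps from $0$ to $+\infty$ at $\Delta\tau=0$, so $h$ is only weakly monotonic and not finite-valued, which is also a latent difficulty for the paper's claim that eq.~\eqref{eq:perfect_g} is exactly achievable. One caveat applies to both arguments equally: sufficiency for a binary label is a weak notion (any statistic that determines the sign of $\Delta\tau$ is sufficient), so neither proof pins down $h$ beyond what the sign structure forces; but as a reconstruction of the intended argument yours is faithful and, on the monotonicity step, more defensible than the original.
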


\begin{proof}
    $\tau^X_{t_v} - \tau^X_{t_u}$ is by definition a minimal sufficient statistic. Hence, since the estimator $\beta^\top \left(F_\phi(X_{t_v}) - F_\phi(X_{t_u})\right)$ is minimal (see \sref{Proposition}{prop:minimal_sufficiency}),  \sref{Definition}{def:minimal_sufficiency} states that there exists a bijective function $h:\mathbb{R}\rightarrow \mathbb{R}$ (hence monotonic), such that $\beta^\top \left(F_\phi(X_{t_v}) - F_\phi(X_{t_u})\right) = h(\tau^X_{t_v} - \tau^X_{t_u})$.

\end{proof}

\begin{main_result}
    Using the estimator $\beta^\top F_\phi(X_{t})$, we can identify the true state $\tau^X_{t}$ from $X_{t}$ up to a monotonic transformation $h$.
\label{theorem:identifiability}
\end{main_result}

\begin{proof}
    Let $t_{ref}$ a reference sample time of a new system. Hence, any future sample $X_t$ with $t>t_{ref}$ is sampled from a degraded system. We can assume w.l.o.g. $\tau^X_{t_{ref}}=0$ (since there is no absolute notion of state). From \sref{Lemma}{lemma:minimal_sufficiency} and assuming eq.~\eqref{eq:perfect_g} is achievable (infinite data and $F_\phi$ assumptions), learned parameters $(\beta,\phi)$ are such that $\beta^\top \left(F_\phi(X_{t_{ref}}) - F_\phi(X_{t_v})\right) = h(\tau^X_{t_v})$. Hence, defining the shift constant $C = \beta^\top F_\phi(X_{t_{ref}})$, we get 
    $\beta^\top F_\phi(X_{t_v}) = h(\tau^X_{t_v}) + C$. 
    %

\end{proof}

\section{Related work}
\label{sec:related_work_ctd}

\subsection{Standard trend detection methods}

The trend is any monotone factor underlying temporal data, a ``general direction and tendency" \cite{goldsmith2012monitoring}. It can then be a drift in values, moments, interactions between observed variables, or more generally in the parameters of the generative model from which data have been sampled (i.e., the model of the system $\mathcal{S}$). Different methods may be used depending on the prior information about the trend. We list commonly used trend detection or estimation methods, some recent applications and the relation with our approach.

\paragraph{\textbf{Mann-Kendall test.}} 
It evaluates whether observed values tend to increase or decrease over time using a nonparametric form of monotonic trend regression analysis 
\cite{Mann1945,Kendall1975,Gilbert1987}. 
Mann-Kendall test analyzes the sign of the difference between data samples, with a total of $N(N-1)/2$ possible pairs where $N$ is the number of observations. It accepts missing values, and the data is not required to follow a particular distribution. Hence, the Mann-Kendall statistic for a univariate time series $X$ observed at timesteps $\{t_1, \dots, t_{N}\}$ is:


\begin{equation}
\label{eq:mk_statistics}
    S = \sum_{i=1}^{N-1}\sum_{j=i+1}^N sign (X_{t_i}-X_{t_j})
    \;,
\end{equation}

\noindent where $sign(x)\coloneqq\mathds{1}_{x>0}-\mathds{1}_{x<0}$. 
The hypothesis <<$H_0: \text{no trend}$>> is rejected when $S$ and $2S/(N(N-1)$ are significantly different from $0$. Eq.~\eqref{eq:mk_statistics} is directly related to the CTE problem of eq.~\ref{eq:ranking_problem}. 

\paragraph{\textbf{Finding relation between time and observations.}} It consists in regressing time index $t$ on a response variable $X_t$. For example, $X_t=\beta_0 + \sum_{p=1}^P \beta_p (t)^p + \epsilon$. The null test hypothesis is $H_0:\beta_p=0$ $\forall p$ (i.e., the absence of trend). A more general flexible model use functions of time that can be estimated with smoothing splines, spline regression, or kernel smoother \cite{gray2007comparison}. We note that this method can be used by regressing time not on observation $X$ but on an embedding of $X$, i.e. $F(X_t)=t$. Our CTE implicitly does time regression with an adapted and efficient CL procedure that does not consider absolute time value.

\paragraph{\textbf{Residual of the decomposition of data into stationary components.}} It assumes the existence of stationary generative factors (e.g., cycle or seasonality) and a trend that can be seen as residual. For example, the \textit{empirical mode decomposition} (EMD) \cite{huang1998empirical} is a framework to decompose time series into oscillatory sources $\{c_k\}_{k=1}^K$, whose number of modes is strictly decreasing. By construction, $c_K$ is the trend. CTE method directly filters the information present in $\{c_k\}_{k=1}^{K-1}$ to find $c_K$ (up-to monotonic transform) if $c_K$ is effectively a monotonous factor.

\begin{application}
Climate studies commonly use the methods presented above to extract and analyze trends \cite{AlmendraMartin2022,Huang2022}. Climate data, e.g., hydro-climatic data \cite{Carmona2014} like soil moisture \cite{AlmendraMartin2022} or drought information \cite{Huang2022}, contain significant nonlinear long-term trends. Hence, generic methods like EMD are now commonly used \cite{Huang2022,Wei2018,Carmona2014}. Yet, when data dimensionality or dataset size grows, or when data type is not standard (e.g., large topographic data), EMD finds its limits, despite the recent development of faster EMD methods \cite{zhang2021serial}. The use of NNs in our CTE method, trained with efficient universal procedure, overpasses this problem. CTE can then at least be a comparative method on climate data analysis problems, and at best outperform currently used trend extraction methods.
\qedwhite
\end{application}

\paragraph{\textbf{Monitoring of rolling statistics.}} It consists of computing and monitoring a set of statistics $s^{X_t}$ for all samples $X_t$ at each time $t$. $s^{X_t}$ can be a moment, a covariance (for multivariate data) or autocovariance (for time series data) matrix, a cumulant, parameters of a model \cite{saxena2008turbofan,adamowski2009development}, etc. The sequence $\{s^{X_t}\}_{t \in {1,\dots,N_X}}$ is then monitored to find a trend. Generally, any embedding $F(X_t)$ may be monitored while $F(X_t)$ is informative about $\tau^X$ (e.g., sufficient statistics). The difficulty is to choose the right sample's statistics/embedding where the trend is hidden. Our CTE approach learns the optimal statistics/embedding, then requires no expert definition of the statistics to be monitored. 

\paragraph{\textbf{Blind source separation (BSS).}} It identifies the source factors that generated the observed data, among which the trend if it exists. A common way to do BSS is to apply \textit{independent component analysis} (ICA) \cite{hyvarinen2000independent}. Recent works proposed solutions to the general nonlinear BSS problem using new nonlinear ICA methods based on contrastive learning \cite{hyvarinen2016unsupervised,hyvarinen2019nonlinear}. When structural components are independent, ICA factors include the trend factor (up-to scalar transform, like for CTE). These methods may use NNs as embedding functions and then deal with many data types. Our CTE is a particular case of these nonlinear ICA with a stronger inductive bias towards trend detection. 

\paragraph{\textbf{Slow feature analysis.}} Considering the trend is the slowest non-constant factor underlying data, \textit{slow feature analysis} (SFA) \cite{wiskott2002slow} is a natural solution for trend extraction. It consists of extracting features $F_\phi(X_t)$ from data such that $\|F_\phi(X_t) - F_\phi(X_{t-1}) \|_2$ is the lowest possible, under the constraint that $\{F_\phi(X_{t_i})\}_{i=1}^{N_X}$ components are orthogonal. Recent works showed the relation between SFA and BSS \cite{blaschke2007independent,schuler2019gradient,pineau2020time}. 

\vspace{0.3cm}

Finally, the different standard methods presented above are related to our approach that is more general and universal, and adapts on all situations with all types of data, thanks to the use of NNs and ad-hoc learning procedure. 

\subsection{Relation with survival analysis}
\label{sec:relation_ctd_survival}

A field related to the trend detection problem is the \textit{survival analysis} (SA). The objective of SA is to estimate the lifespan of monitored systems (e.g., the time-to-death of a patient, the time-to-default of a loan, time-to-extinction of species) from data. An illustration of typical survival analysis is given in \autoref{fig:survival}.

\begin{figure}[h]
    \centering
    \includegraphics[width=0.5\linewidth]{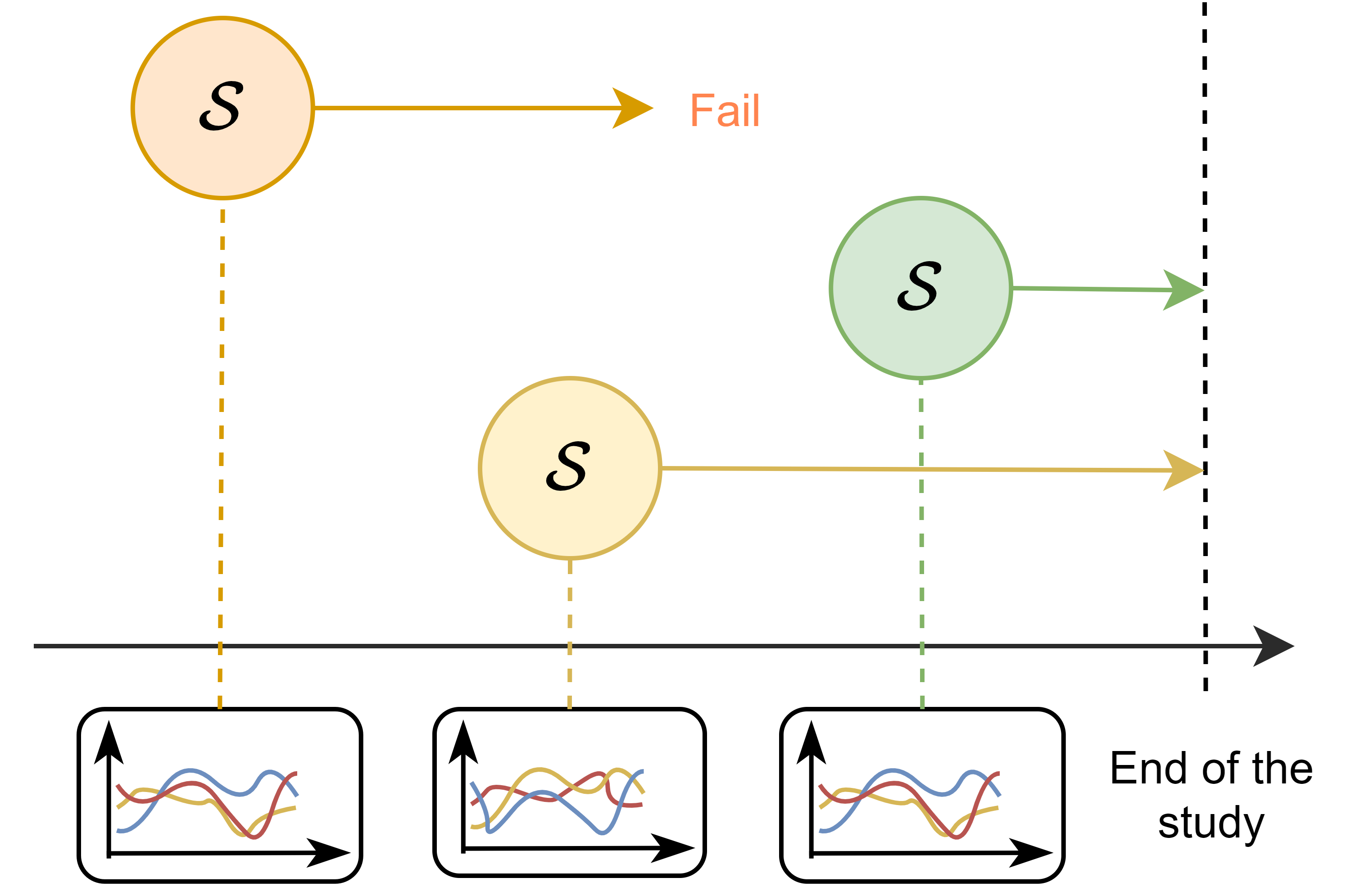}
    \caption{In survival analysis, several systems enter a study by giving a data sample. At the end of the study, we know if a system has failed or has survived. The objective is to predict the lifespan from data using survival information.}
    \label{fig:survival}
\end{figure}

Using previous notations, we have $X_t$ an observed sample at time $t$, $T_X$ the (unknown) lifespan of the system that generated $X$, $T_X - t$ its failure time at time $t$, $\tau_t$ its (unobserved) state at sampling time $t$, $T_X-\tau_t$ its \textit{remaining useful life} (RUL) at time $t$. The objective is to model the conditional \textit{survival function} $S(\tau_t|x)=P(T_X>\tau_t|X_t=x)$, hence the probability that the current state is lower than death-time. A standard way to estimate the survival function $S$ is to use \textit{hazard functions} $h$, defined as $h(\tau_t | x) := - \partial_{\tau_t} \log \left( S(\tau_t|x) \right)$. 

\vspace{0.15cm}

CTE shares several features with survival analysis. First, lifespan is generally assumed to be a monotonic hidden state. Second, the most used performance metric for survival analysis is the \textit{concordance index} (CI) \cite{harrell1996multivariable}, which measures the fraction of pairs of samples that can be correctly ordered in terms of estimated lifespan. It is then directly related to our objective of eq.~\eqref{eq:ranking_problem}. 

\vspace{0.2cm}

\begin{remark}
As illustrated in \autoref{fig:survival}, survival analysis framework generally does not have one system from which a sequence of data is recorded, but several systems from which one data point is recorded (when the systems enter the study). Yet, it is common to assume that all samples have been generated from "equivalent systems", at different moments of its life. Hence, we assume that some of the generative factors, from which we want to disentangle the monotonic factor, represent  individual system's features. 
\end{remark}



The idea of directly training the model to maximize the CI, as in eq.~\eqref{eq:ranking_problem}, exists in survival analysis literature. Authors of \cite{steck2008ranking} describe the ranking problem similarly to eq.~\eqref{eq:ranking_problem}, and relate it to the standard \textit{proportional hazard model} (PHM) $h(\tau | x)=h_0(\tau)\exp(F_\phi(x))$ \cite{cox1972regression}. Nevertheless, it is commonly restricted to linear $F_\phi(x)=\phi^Tx$, for computational reasons and because first order is sufficient in many cases. Authors of \cite{katzman2018deepsurv} use a NN for $F_\phi$ to create a personalized treatment recommendation system, but use a CTE-like function \eqref{eq:cte_model} only for the prediction (i.e., recommendation) part, showing that the subsequent recommendation system, under PHM assumption, is the difference between the embedding of two samples. In \cite{jing2019deep} authors fit a lifespan prediction model learned by pair, which regresses the difference between two samples embedding on the difference between samples target RUL. 

An alternative to PHM is the multi-task logistic regression (MTLR) \cite{yu2011learning}. It consists in building a series of logistic regression models fitted on different time intervals to estimate the probability that the event of interest (e.g., death) happened within each interval. Another alternative to Cox model is the \textit{proportional odds model} (POM) $O(\tau | x)=O_0(\tau)\exp(F_\phi(x))$ \cite{bennett1983analysis}, where $O(\tau|x)$ is the odd of individual surviving beyond time $\tau$. We remind that an odd is the ratio $\frac{S(\tau|x)}{1-S(\tau|x)}$, such that $\exp(F_\phi(x))=\frac{S(\tau|x)}{1-S(\tau|x)}$ for a constant baseline function $O_0(\tau)$. The POM is therefore directly related to the ratio of eq.~\eqref{eq:likelihood_ratio}. 

Yet, we surprisingly found no reference of CTE-like survival analysis with learning procedure on eq.~\eqref{eq:ranking_problem} as described in our paper. 
We therefore apply CTE on standard survival analysis datasets as a practical contribution in \sref{Section}{sec:survival_analysis}. 

\medskip

\section{Experiments}
\label{sec:experiments}

In this section, we propose experiments to illustrate our approach on several datasets with different types of hidden trend. We note $\mathcal{X}^{train}$ and $\mathcal{X}^{test}$ the training and test sets of sequences with hidden trends (validation sets are built from train set). All the results in the tables are computed on the test set. 

We test our CTE method on several datasets, where each observation $X_t$ may be a multivariate time series (Section \ref{sec:balls_springs}) or a sample in survival analysis experiments (Section \ref{sec:survival_analysis}). We finally propose a discussion on the effect of noise on CTE in Section \ref{sec:noisy_cte}. For the two first experiments, we compare with NN-based time series decomposition: NN SFA \cite{schuler2019gradient} (NFSA) and temporal contrastive learning ICA \cite{hyvarinen2016unsupervised} (TCL-ICA). This choice is motivated by the fact that other standard methods (see Section \ref{sec:related_work_ctd}) that are relevant for trend extraction are not adapted to high-dimensional data like images or multivariate time series. 
For the survival analysis, we compare CTE to several models implemented in the PySurvival library \cite{pysurvival_cite}. 

\begin{remarque}
In all experiments, the bold numbers are the best results in term of means; yet taking into account the standard deviations (that can be large), we cannot always claim that CTE is the absolute best model, only that it achieves at least other models' results in several trend factor extraction.
The objective is to illustrate the flexibility of the CTE approach and its universality in trend extraction problems.  
\end{remarque}

\subsection{Ball-springs systems's health monitoring}
\label{sec:balls_springs}
\noindent In this section, we use the same experimental setup than in \cite{pineau2020time}. We will use the model of this paper, called Seq2Graph, as comparison.

\paragraph{\textbf{Dataset.}} We simulate 15000 samples (10000 for train/validation and 5000 for test) from a 10-ball-springs system, consisting of the simultaneous trajectories of 10 identical balls 
in a 2-dimensional space, each ball being randomly connected to some others by springs. 
Each sample $X_{t_i}$, $i \in \llbracket 1, 50 \rrbracket$ is a time series with $10 \times 2$ variables (10 balls in a 2D-space) and 50 time steps. The initialization $X_{t_1}$ of each time series $X_t$ is normally sampled. To simulate the drift, for each sequence $X=\{X_{t_1}, \dots, X_{t_50} \}$, a constant ageing factor $\alpha^X \sim \mathcal{U}([0.9, 1])$ is applied to the system: at each timestep $t$, we randomly choose a spring $(i,j) \in \{1, \dots, 10\}^2$ and multiply its rigidity by $(\alpha^X)^{t}$, i.e. an exponential ageing coefficient with respect to sample time index. Every $50$ samples, we take a new random system, another ageing factor is sampled and we reiterate the simulation of the next sequence of 50 samples. 
The objective is to extract the trend information quantified by $\alpha^X$, without the knowledge that it is hidden in the causal interactions between balls.

\paragraph{\textbf{Results.}} We compare NFSA, TCL-ICA and CTE, plus the model Seq2Graph taken from the source of this experiment \cite{pineau2020unsupervised}. We use a relational NN (RelNN, as in Seq2Graph) and a 3-layer convolutional NN (CNN) as embedding functions $F_\phi$, fitted using Adam optimizer \cite{kingma2014adam}. Results are given in \autoref{tab:springs_results}.

\begin{table}[h]
    \centering
    \begin{tabularx}{0.9\textwidth}{|X|YYYY|}
    \hline
     & Seq2Graph & NFSA & TCL-ICA & CTE  \\
    \hline
    RelNN & $ \bold{0.97} \pm 0.03 $  & $0.93 \pm 0.03 $ & $0.95 \pm 0.02$ & $ \bold{0.97} \pm 0.02 $ \\
    CNN & $ - $  & $0.94 \pm 0.03 $ & $0.60 \pm 0.05$ & $ \bold{0.97} \pm 0.02 $ \\
    \hline
    \end{tabularx}
    \caption{Absolute correlation between estimated and true trend, using two types of embedding neural architecture. Means and standard deviations are computed using 5-fold train/test split, with randomly initialized NNs for each fold.}
    \label{tab:springs_results}
\end{table}

We see that with RelNN architecture, finding the trend hidden in the variable interactions is easy for all methods. Yet, when using a more generic function $F_\phi$, the results of TCL-ICA drops. This sensitivity to embedding function was already unveiled in the experiments of \cite{hyvarinen2016unsupervised}. Naturally, the CTE that is specialized in trend extraction is the more robust and performing approach.

\subsection{Survival analysis experiments}
\label{sec:survival_analysis}

We showed in \sref{Section}{sec:relation_ctd_survival} how the CTE method is related to SA. In this section, we illustrate this relation by applying CTE on survival analysis problems. 

\paragraph{\textbf{Datasets.}} We use four public survival analysis datasets.

\vspace{0.05cm}

Customer churn prediction (Churn) consists in estimating the percentage of customers that stop using the products and services of a company. The survival analysis for customer churn helps companies predicting when a customer is likely to quit considering its personal features. The dataset contains 2000 samples with 12 features, whose $53.4\%$ are right-censored. 

\vspace{0.15cm}

Credit risk (Credit) is the risk carried by a loan company when people borrow money. It corresponds to the likelihood of borrower's credit default with respect to personal features. Survival analysis for credit risk predicts if and when a borrower is likely to fail. The dataset contains 1000 samples with 19 features, whose $30.0\%$ are right-censored. 

\vspace{0.15cm}

Treatment effects on survival time are fundamental for pharmaceutical laboratory. It is possible to do survival analysis of patients. Two public datasets exist. First, German Breast Cancer Study Group (GBCSG2) contain a subset of variables from the German breast cancer study \cite{schumacher1994randomized}. It studies the effects of a treatment with hormones on survival time without cancer recurrence. The dataset contains 686 samples with 8 features, whose $56.4\%$ are right-censored.


\vspace{0.15cm}

Finally, the predictive maintenance of mechanical equipment consists in predicting when an equipment will fail
. We use a public dataset, called Maintenance, whose data is extracted from sensors on manufacturing machines to predict which will fail soon. The dataset contains 1000 samples with 5 features, whose $60.3\%$ are right-censored.

\paragraph{\textbf{Results.}} We compare our CTE survival analysis with five other standard models: the linear and neural Cox proportional hazard models (Cox-PHM) \cite{cox1972regression,katzman2018deepsurv}, the extra-tree and random forest survival analysis (RFS) \cite{ishwaran2008random} and a multi-task logistic regression (MTLR) survival analysis \cite{yu2011learning}. In tree-based survival, an estimation of the \textit{cumulative hazard function} ($\int_t h(t|x)dt$) is done with bags of trees. Other models have been presented in  \sref{Section}{sec:relation_ctd_survival}. We do not provide additional information here. An exhaustive introduction to these models is provided in the website of the PySurvival library \cite{pysurvival_cite} that we used to implement the comparative methods. For these methods, we chose the hyperparameters used in PySurvival tutorials when available. 

\vspace{0.15cm}

For the survival experiments, we used a 10-folds train-test setup. Each dataset is divided into 10 folds used for cross-validation i.e., one fold serves as the testing set while the other ones compose the training set. This 10-folds train-test separation is repeated several times for robustness of the results.  \autoref{tab:survival_results} shows the mean and standard-deviation of the concordance index (CI, see  \sref{Section}{sec:relation_ctd_survival}) computed on the test samples. 

\begin{table}
\centering
\begin{tabularx}{0.99\textwidth}{|X|YYYY|}
    \hline
      & Churn    & Credit & GBCSG2 & Maintenance \\
    \hline
    Linear Cox & $88.1 \pm 1.0$ & $ 77.0 \pm 1.9$ & $66.3 \pm 5.5$ & $96.1 \pm 1.7$ \\
    Neural Cox & $87.5 \pm 2.4$ & $ 75.0 \pm 2.6$ &  $64.4 \pm 4.2$ & $99.3 \pm 1.0$ \\
    Extra Tree & $85.9 \pm 1.2$ & $71.2 \pm 4.3$ & $63.8 \pm 5.4$ & $94.1 \pm 1.5$ \\
    Random Forest & $84.5 \pm 1.5$ & $ 71.5 \pm 3.2$ & $67.6 \pm 4.4$ & $93.1 \pm 2.1$ \\
    Multitask & $89.2 \pm 1.7$ & $ 71.4 \pm 3.7$ & $\textbf{67.9} \pm 7.7$ & $ 93.0 \pm 2.9 $ \\
    CTE (ours) & $\textbf{89.9} \pm 0.9$  & $\textbf{77.2} \pm 1.8$     & $\textbf{67.9} \pm 5.7$ & \textbf{99.6} $\pm$ 0.4 \\
    \hline
\end{tabularx}
\caption{Concordance index for 6 survival analysis models (including CTE) on four datasets. Means and standard deviations are computed using 10-fold train/test split repeated 5 times using randomly initialized NNs (for survival analysis models that use NNs) for each fold.}
\label{tab:survival_results}
\end{table}

\subsection{Noisy trend in CTE}
\label{sec:noisy_cte}

This subsection of the experiments serves as a discussion on the impact of the noise on trend detection, which is standard in real world problems. For example, in climate change trend estimation, the complex and numerous interactions between environment and the variables of interest may perturb the estimator. 

Noisy trend can be characterized by a contamination of the density $p(\tau_{t_u}, \tau_{t_v} | C_{uv})$ with another density $\nu(\tau_{t_u}, \tau_{t_v} | C_{uv};M)$:

\begin{align}
    p^\nu\left(\tau_{t_u}, \tau_{t_v} | C_{uv} \right) := \left(1-\eta\right)p\left(\tau_{t_u}, \tau_{t_v} | C_{uv}\right) + \eta \nu\left(\tau_{t_u}, \tau_{t_v} | C_{uv};M \right)
\label{eq:contamination}
\end{align}

\noindent with $\eta \in [0,1]$ the prevalence of the contamination as named in \cite{fujisawa2008robust}, and $M$ representing the maximum temporal dispersion of the noise (i.e., if $|t_u - t_v|>M$ then $\nu=0$). 


\paragraph{\textbf{Dataset.}} To illustrate the impact of the noise on CTE, we introduce another dataset where trend detection is a useful task: the NASA public Commercial Modular Aero-Propulsion System Simulation dataset (CMAPSS) dataset \cite{saxena2008turbofan}, that consists in a turbine engine degradation and maintenance simulations. We use the dataset FD001 that contains 100 time series of the output of the turbine-engine system, recorded at sea level. Time series are on average 206 time-steps long and have 13 non-constant variables. The engine is operating normally at the start of each time series and develops a fault of unknown initial magnitude in its first moments. We only know that the impact of this fault on the system grows in magnitude until system fails. We extract from these time series sub-trajectories of length 25, with a rolling window with stride 5, to make our dataset.

\begin{figure}[t]
    \centering
    \includegraphics[width=0.24\linewidth]{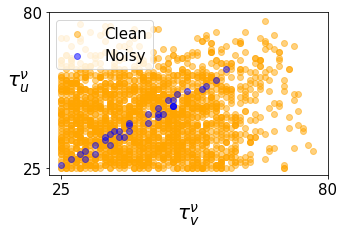}
    \includegraphics[width=0.24\linewidth]{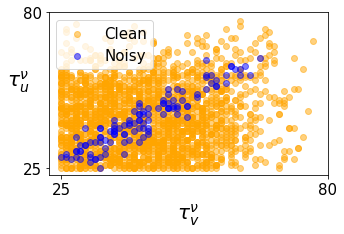}
    \includegraphics[width=0.24\linewidth]{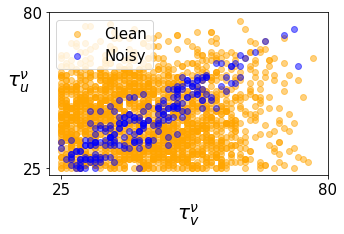}
    \includegraphics[width=0.24\linewidth]{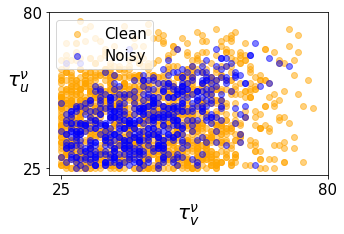}
    \includegraphics[width=0.99\linewidth]{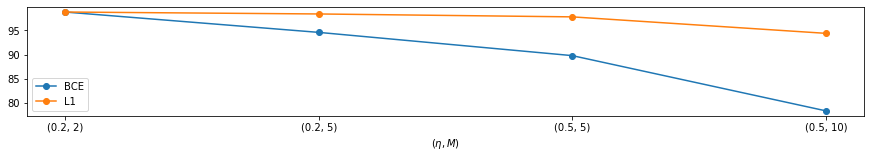}
    \caption{Example of CTE on CMAPSS data with noisy trend. \textbf{Top}: Distribution of couples $(\tau_{t^\nu_u}, \tau_{t^\nu_v})$, for different couples $(\eta, M)$ respectively $(0.2, 2), (0.2, 5), (0.5, 5), (0.5, 10)$ representing the levels of noise. \textbf{Bottom}: mean accuracy of the trend classification for all pairs of samples of the test (unnoisy) set, using loss eq.~\eqref{eq:ranking_problem} versus L1 loss (result from \cite{ghosh2017robust}).}
\label{fig:local_noisy_ctd}
\end{figure}


We contaminate the source with different parameters $(\eta, M)$ in eq. \eqref{eq:contamination}: for each sequence $X\in \mathcal{X}$, we randomly select a proportion $\eta$ of time steps in $\{t_1, \dots, t_{N_X}\}$. For each selected time step $t_i$, we take the $M$ time steps around $t_i$ ($\{t_{i-M/2+1}, \dots, t_{i+M/2}\}$) and randomly shuffle them. We name $t^\nu_t$ this new indexing of the samples, and $\tau^\nu_t:=\tau_{t^\nu}$ the corresponding noisy trend process. We then supervise the CTE with labels $C^\nu_{uv}:=\mathds{1}_{\{t^\nu_u \leq t^\nu_v\}}$, not anymore equal to $C_{uv}$.


\paragraph{\textbf{Results.}} In our CTE framework, noisy trend is equivalent to a \textit{noisy classification} problem, for which a large literature exists \cite{han2020survey}. We use a robust classification losses \cite{ghosh2017robust}, for example a symmetric loss like the L1 loss (see bottom figure in \autoref{fig:local_noisy_ctd}).

\begin{remarque}
In case of \textit{heavy contamination} (e.g., we do not known that the monitored system has been restored), we can use a $\gamma$-crossentropy loss \cite{fujisawa2008robust}. It has been used in \cite{sasaki2020robust} to develop a TCL-ICA robust to source contamination. 
\qedwhite
\end{remarque}
\section{Conclusion}
\label{sec:conclusion}

This paper proposes a new universal trend estimation method using the contrastive learning framework, CTE. Our model is supported by theoretical identifiability results and numerous experimental validations. We showed on several datasets that the strong inductive bias of CTE enables a more robust and accurate trend estimation than two other universal factor extraction methods. Moreover, using the strong relation between trend detection and survival analysis, we applied CTE on survival analysis problems and showed that it outperforms standard models. In further work, we plan to add \textit{interpretability layers} to CTE for industrial reasons, for example, using an attention mechanism in the embedding functions. We also plan to extend our model to treat the trend detection problem when several independent trends are underlying data.




\bibliographystyle{plain}
\bibliography{biblio}


\end{document}